\providecommand{\newblock}{\hskip .11em plus .33em minus .07em}
\definecolor{acadTeal}{RGB}{0, 128, 128}
\definecolor{acadOrange}{RGB}{230, 126, 34}
\definecolor{acadGray}{RGB}{100, 100, 100}
\definecolor{acadLight}{RGB}{250, 252, 255}
\theoremstyle{definition}
\newtheorem{definition}{Definition}
\theoremstyle{plain}
\newtheorem{lemma}{Lemma}
\newtheorem{theorem}{Theorem}
\newtheorem{corollary}{Corollary}
\DeclareMathOperator*{\argmax}{arg\,max}
\DeclareMathOperator*{\softmax}{softmax}
\title{\textbf{The Geometry of Thought: Disclosing the Transformer as a Tropical Polynomial Circuit}}
\author[1]{Faruk Alpay}
\author[2]{Bilge Senturk}
\affil[1]{Department of Computer Engineering, Bah\c{c}e\c{s}ehir University, Istanbul, Turkey \protect\\ \texttt{faruk.alpay@bahcesehir.edu.tr}}
\affil[2]{Department of Industrial Engineering, Bah\c{c}e\c{s}ehir University, Istanbul, Turkey \protect\\ \texttt{bilge.senturk@bahcesehir.edu.tr}}
\date{}
\begin{document}

\maketitle

\begin{abstract}
\noindent We prove that the Transformer self-attention mechanism in the high-confidence regime ($\beta \to \infty$, where $\beta$ is an inverse temperature) operates in the \emph{tropical} semiring (max-plus algebra). In particular, we show that taking the infinite-temperature limit of the softmax attention converts it into a tropical matrix product. This reveals that the Transformer's forward pass is effectively executing a dynamic programming recurrence (specifically, a Bellman--Ford path-finding update) on a latent graph defined by token similarities. Our theoretical result provides a new geometric perspective for chain-of-thought reasoning: it emerges from an inherent shortest-path (or longest-path) algorithm being carried out within the network's computation.
\end{abstract}

\section{Introduction}
The Transformer architecture \cite{Vaswani2017} has revolutionized sequence modeling, yet a rigorous understanding of its reasoning capabilities is still emerging. A key component is the softmax-based self-attention, which distributes weight across different tokens. In this work, we examine the extreme ``infinite confidence'' regime where the softmax becomes arbitrarily sharp (inverse temperature $\beta \to \infty$). We build on insights from \emph{idempotent analysis} \cite{Maslov1992,Kolokoltsov1997,Litvinov2007}, where taking a ``tropical limit'' (sometimes called the Maslov dequantization) turns sums into maxima. In particular, the classical identity
\begin{equation}
\lim_{\beta \to \infty} \frac{1}{\beta} \log \left( \sum_{j} e^{\beta x_j} \right) = \max_{j} x_j \,,
\end{equation}
encodes the smooth approximation of the $\max$ function via $\softmax$. This tropicalization process has deep connections to optimization and algebraic geometry \cite{Cuninghame1979,Develin2004,Maclagan2015}: in the tropical (max-plus) semiring, addition is idempotent ($a \oplus a = a$) and computes the maximum, while multiplication corresponds to ordinary addition. Linear algebra over this semiring solves shortest-path or longest-path problems \cite{Baccelli1992,Butkovic2010}. We show that a Transformer layer, in the limit of confident attention, performs nothing other than a tropical matrix-vector multiplication.

Our contribution is a theoretical disclosure of how this relates to \emph{chain-of-thought} (CoT) reasoning in large language models. Chain-of-thought prompting \cite{Wei2022} elicits intermediate reasoning steps in models, improving performance on complex tasks. Recent theoretical work has suggested that explicitly generating $T$ intermediate steps can expand the class of functions a Transformer can represent, essentially simulating deeper circuits and overcoming the limitations of fixed-depth parallel computation \cite{Hahn2020,Merrill2023,Li2024}. Here we provide a concrete mechanism: as $\beta \to \infty$, each attention layer makes a hard decision to route information along the single best edge for each token. Stacking $L$ such layers means the network selects an optimal length-$L$ path through the token interaction graph, effectively performing $L$ steps of a pathfinding algorithm (akin to Bellman--Ford dynamic programming updates \cite{Bellman1958,Ford1956}). We illustrate this phenomenon with a toy example in Figure~\ref{fig:graphpath}. In geometric terms, the Transformer's computation in this regime traces out a path on a tropical polytope, crossing tropical hypersurface decision boundaries where different tokens compete (see Figure~\ref{fig:simplex}). This view implies that CoT emerges from the model literally computing a sequence of moves (hops between tokens) that maximizes a cumulative similarity score---in other words, an internal reasoning path.

\section{Preliminaries}

\begin{definition}[Idempotent Semirings and Tropical Arithmetic]
A \emph{semiring} is an algebraic structure $(S, \oplus, \otimes)$ consisting of a set $S$ equipped with an addition $\oplus$ and a multiplication $\otimes$ such that $(S,\oplus)$ is a commutative monoid with identity element $0_S$, $(S,\otimes)$ is a monoid with identity $1_S$, multiplication distributes over addition, and $0_S$ is absorbing for $\otimes$. A semiring is called \emph{idempotent} if $a \oplus a = a$ for all $a \in S$. Idempotent addition induces a natural partial order on $S$ by $a \leq b$ iff $a \oplus b = b$.

A fundamental example is the \textbf{tropical semiring} (also known as the max-plus algebra) on $S = \mathbb{R} \cup \{-\infty\}$, with operations defined by
\[
x \oplus y \coloneqq \max\{x, \, y\}, \quad x \otimes y \coloneqq x + y.
\]
Here $0_S = -\infty$ (the additive identity, since $\max\{x,-\infty\} = x$) and $1_S = 0$ (the multiplicative identity, since $x + 0 = x$). The addition $\oplus$ is idempotent because $\max\{x,x\} = x$. Computation in the tropical semiring corresponds to taking maxima (``optimization'') instead of ordinary sums, and thus linear algebra over tropical arithmetic solves optimization problems (e.g. finding shortest or longest paths in graphs \cite{Cuninghame1979,Baccelli1992}).
\end{definition}

\begin{lemma}[Maslov Dequantization: $\softmax \to \max$]\label{lem:maslov}
For any real numbers $x_1, x_2, \dots, x_n$, one has
\begin{equation}
\lim_{\beta \to \infty} \frac{1}{\beta} \log\Biggl(\sum_{j=1}^n e^{\beta x_j}\Biggr) = \max_{1 \le j \le n} x_j~.
\label{eq:logsumexp}
\end{equation}
\end{lemma}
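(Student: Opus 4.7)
The plan is to reduce the statement to a sandwich argument by factoring the dominant exponential out of the sum, so that the logarithm contributes only a correction term that vanishes at rate $O(1/\beta)$. Let $M \coloneqq \max_{1 \le j \le n} x_j$, which exists because the index set is finite. First I would rewrite
\[
\sum_{j=1}^n e^{\beta x_j} = e^{\beta M} \sum_{j=1}^n e^{\beta(x_j - M)},
\]
take logarithms, and divide by $\beta$ to obtain
\[
\frac{1}{\beta} \log \Biggl( \sum_{j=1}^n e^{\beta x_j} \Biggr) = M + \frac{1}{\beta} \log \Biggl( \sum_{j=1}^n e^{\beta(x_j - M)} \Biggr).
\]
It then suffices to show that the second term tends to $0$ as $\beta \to \infty$.

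Next I would bound the inner sum from above and below. Since $x_j - M \le 0$ for every $j$, each term $e^{\beta(x_j - M)}$ lies in $(0,1]$, with equality to $1$ exactly for those indices realising the maximum. Letting $k \ge 1$ denote the number of such maximising indices, one gets
\[
k \;\le\; \sum_{j=1}^n e^{\beta(x_j - M)} \;\le\; n.
\]
Applying $\tfrac{1}{\beta}\log(\cdot)$ termwise across this chain yields
\[
\frac{\log k}{\beta} \;\le\; \frac{1}{\beta} \log \Biggl( \sum_{j=1}^n e^{\beta(x_j - M)} \Biggr) \;\le\; \frac{\log n}{\beta},
\]
and both outer bounds tend to $0$ as $\beta \to \infty$, completing the squeeze and establishing the claimed identity.

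As for where the real difficulty lies: the calculation itself is routine, and the only point that deserves genuine care is the behaviour at the tropical zero $-\infty$, which will matter later when the lemma is applied entry-wise inside attention matrices whose entries may be tropically null. With the convention $e^{\beta \cdot (-\infty)} = 0$, any index with $x_j = -\infty$ simply drops out of the sum and the argument above proceeds unchanged provided at least one $x_j$ is finite; the degenerate case where every $x_j = -\infty$ returns $-\infty$ on both sides by convention. I would also flag, as a by-product of the same proof, the quantitative estimate $\bigl| \tfrac{1}{\beta}\log\sum_j e^{\beta x_j} - \max_j x_j \bigr| \le \tfrac{\log n}{\beta}$, which is uniform in the $x_j$. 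This uniform rate is precisely what the later sections will need in order to iterate the dequantization through an $L$-layer tropical matrix product without having the error compound badly with depth.
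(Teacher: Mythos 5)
Your proof is correct and follows essentially the same route as the paper's: factor out $e^{\beta M}$, squeeze the remaining sum between a constant at least $1$ and $n$, and observe that $\frac{1}{\beta}\log(\cdot)$ of a bounded quantity vanishes. The refinements you add (the lower bound $k$ counting maximizers, the $-\infty$ convention, and the uniform $\frac{\log n}{\beta}$ rate) are sound but not needed for the statement as given.
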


\begin{proof}
This is a standard result in idempotent analysis \cite{Litvinov2007}. For completeness, we include a proof. Let $M = \max_{1\le j\le n} x_j$. Then we can factor $e^{\beta M}$ out of the sum:
\[
\sum_{j=1}^n e^{\beta x_j} = e^{\beta M} \sum_{j=1}^n e^{\beta (x_j - M)} \,.
\]
Since $x_j - M \le 0$ for all $j$, each term in the remaining sum satisfies $0 \le e^{\beta (x_j - M)} \le 1$, and at least one term equals $1$ (when $x_j = M$). Therefore,
\[
1 \le \sum_{j=1}^n e^{\beta (x_j - M)} \le n \,.
\]
Taking the natural logarithm and dividing by $\beta$, we obtain
\[
\frac{1}{\beta} \log \left( \sum_{j=1}^n e^{\beta x_j} \right) = M + \frac{1}{\beta} \log \left( \sum_{j=1}^n e^{\beta (x_j - M)} \right) \,.
\]
As $\beta \to \infty$, the second term vanishes (since $0 \le \log(\sum_j e^{\beta(x_j - M)}) \le \log n$, which grows more slowly than $\beta$). Thus the limit is $M = \max_j x_j$, as claimed.
\end{proof}

\section{Tropicalization of Transformer Self-Attention}

We now apply the tropical limit to the Transformer’s attention mechanism. Consider $n$ query vectors $q_1,\dots,q_n$, key vectors $k_1,\dots,k_n$, and value vectors $v_1,\dots,v_n$. Define the score matrix $A \in \mathbb{R}^{n\times n}$ by $A_{ij} \coloneqq \langle q_i, k_j \rangle$ (for simplicity we omit the usual scaling factor). The softmax attention output for query $i$ at inverse temperature $\beta$ is
\[
y_i(\beta) = \sum_{j=1}^n \frac{\exp(\beta A_{ij})}{\sum_{k=1}^n \exp(\beta A_{ik})} \, v_j \,,
\]
which we can write in index notation as $y_i(\beta) = \sum_{j=1}^n \alpha_{ij}(\beta) v_j$, where
\[
\alpha_{ij}(\beta) \coloneqq \frac{e^{\beta A_{ij}}}{\sum_{k=1}^n e^{\beta A_{ik}}} \,.
\]
For clarity we first present the result for scalar values $v_j \in \mathbb{R}$; the vector-valued case can be understood by applying the scalar result to each component of $v_j$.

\begin{theorem}[Tropical Limit of Self-Attention]\label{thm:self-attn-trop}
In the infinite-confidence limit $\beta \to \infty$, the Transformer's self-attention operation becomes a tropical matrix multiplication. Formally, let $A \in \mathbb{R}^{n\times n}$ be the attention score matrix and $V = (v_1,\dots,v_n)^\top$ the column vector of values. Then
\[
\lim_{\beta \to \infty} \softmax(\beta A) \, V = A \otimes V \,,
\]
where the right-hand side is the tropical matrix-vector product, given by $(A \otimes V)_i = \max_{1 \le j \le n} \{ A_{ij} + v_j \}$.
\end{theorem}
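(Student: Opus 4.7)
The plan is to take the tropical limit $\beta\to\infty$ of the softmax attention output $y_i(\beta)=\sum_j\alpha_{ij}(\beta)\,v_j$ by working coordinatewise on the $i$-th entry, extracting the dominant exponential contribution via the factorization argument already used in Lemma \ref{lem:maslov}, and then identifying the limit with the $i$-th entry of the tropical matrix-vector product.

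I would begin by writing the $i$-th coordinate of $\softmax(\beta A)\,V$ as the ratio
\[
y_i(\beta) \;=\; \frac{\sum_{j=1}^n e^{\beta A_{ij}}\,v_j}{\sum_{k=1}^n e^{\beta A_{ik}}},
\]
and then analyzing the limit by the \emph{same} factorization that drives Lemma \ref{lem:maslov}: dividing numerator and denominator by $e^{\beta M_i}$ with $M_i=\max_k A_{ik}$ isolates a sharpening distribution $\alpha_{ij}(\beta)=e^{\beta(A_{ij}-M_i)}/\sum_k e^{\beta(A_{ik}-M_i)}$ that concentrates, as $\beta\to\infty$, on the argmax set $J_i^{\ast}=\{j:A_{ij}=M_i\}$. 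To access the tropical matrix product on the right-hand side, I would then pass the value vector through the Maslov correspondence $v_j\leftrightarrow e^{\beta v_j}$ that is implicit in dequantization: under this lift, classical scalar multiplication by $v_j$ becomes tropical addition of $v_j$ to the exponent, so the numerator takes the combined form $\sum_j e^{\beta(A_{ij}+v_j)}$. A second application of Lemma \ref{lem:maslov} to this composite exponent yields $\tfrac{1}{\beta}\log\sum_j e^{\beta(A_{ij}+v_j)}\to\max_j(A_{ij}+v_j)$, and the denominator's normalization contributes the tropical $1_S$-shift $\max_k A_{ik}$ that is absorbed into the overall rescaling, leaving the $i$-th entry of the limit equal to $\max_j(A_{ij}+v_j)=(A\otimes V)_i$.

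The hard part is precisely the step most easy to gloss over: reconciling the naive \emph{classical} limit $\lim_{\beta\to\infty}\sum_j\alpha_{ij}(\beta)\,v_j=v_{j^\ast(i)}$ (value at the score-argmax) with the \emph{tropical} right-hand side $\max_j(A_{ij}+v_j)$, since these two quantities are numerically distinct in general (they coincide only in degenerate cases such as constant $v_j$). The theorem's equality must therefore be read as an identification under the Maslov functor — classical sums lift to tropical maxima, classical products lift to tropical sums — and not as an equality of real numbers. I would make this rigorous by restating the conclusion in log-coordinates, writing $\lim_{\beta\to\infty}\tfrac{1}{\beta}\log\bigl(Z_i(\beta)\cdot y_i(\beta)\bigr)=(A\otimes V)_i$ with $Z_i(\beta)=\sum_k e^{\beta A_{ik}}$ the softmax normalizer that supplies the tropical unit, and then invoking Lemma \ref{lem:maslov} uniformly in $i$ on the combined exponent $A_{ij}+v_j$ to complete the proof.
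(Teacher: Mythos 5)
Your diagnosis of the ``hard part'' is exactly right, and it is worth stating plainly: the limit of $y_i(\beta)=\sum_j\alpha_{ij}(\beta)\,v_j$ is $v_{j^*}$ with $j^*\in\argmax_j A_{ij}$, and this is \emph{not} equal to $\max_j\{A_{ij}+v_j\}$ in general. Take $n=2$, $A_{i1}=1$, $A_{i2}=0$, $v_1=0$, $v_2=10$: the softmax output tends to $v_1=0$, while $(A\otimes V)_i=10$. The paper's own proof computes $\lim_{\beta\to\infty}y_i(\beta)=v_{j^*}$ correctly but then asserts $v_{j^*}=\max_j\{A_{ij}+v_j\}-M$, which fails precisely because the index maximizing $A_{ij}$ need not maximize $A_{ij}+v_j$ (and the $-M$ is then silently dropped as well). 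So the theorem as stated is false as an equality of real numbers, and your refusal to gloss over this reconciliation is the most valuable part of your proposal: you have located an error that the paper's proof papers over.

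However, your proposed repair does not yet close the gap. The identity you offer, $\lim_{\beta\to\infty}\tfrac{1}{\beta}\log\bigl(Z_i(\beta)\,y_i(\beta)\bigr)=(A\otimes V)_i$, also fails: $Z_i(\beta)\,y_i(\beta)=\sum_j e^{\beta A_{ij}}v_j$ has the values entering \emph{linearly}, so for positive $v_j$ Lemma~\ref{lem:maslov} gives the limit $\max_j A_{ij}$ (the $v_j$ contribute only an $O(1/\beta)$ correction), and for $v_j\le 0$ the logarithm need not even be defined. To invoke Lemma~\ref{lem:maslov} on the combined exponent $A_{ij}+v_j$ you must actually carry out the Maslov lift you describe, i.e.\ replace $v_j$ by $e^{\beta v_j}$ inside the attention sum and prove $\lim_{\beta\to\infty}\tfrac{1}{\beta}\log\bigl(\sum_j e^{\beta A_{ij}}e^{\beta v_j}\bigr)=\max_j\{A_{ij}+v_j\}$, which is Lemma~\ref{lem:maslov} applied to $x_j=A_{ij}+v_j$. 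That is a correct theorem, but it concerns a dequantized attention layer (values carried as exponents, outputs read in $\tfrac{1}{\beta}\log$ coordinates), not $\softmax(\beta A)V$ as written. The two honest statements available are: (i) $\lim_{\beta\to\infty}\softmax(\beta A)V$ is the vector with $i$-th entry $v_{j^*(i)}$, $j^*(i)\in\argmax_j A_{ij}$ (hard argmax routing, assuming the maximizer is unique); or (ii) the Maslov-lifted identity above. Your write-up should commit to one of them explicitly rather than leaving the reinterpretation implicit in the phrase ``under the Maslov functor.''
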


\begin{proof}
Fix an arbitrary row index $i$. Let $M = \max_{1 \le j \le n} A_{ij}$, and assume it is achieved by some index $j^*$. Then
\[
y_i(\beta) = \frac{\sum_{j=1}^n e^{\beta A_{ij}} v_j}{\sum_{k=1}^n e^{\beta A_{ik}}} \,.
\]
Divide numerator and denominator by $e^{\beta M}$:
\[
y_i(\beta) = \frac{\sum_{j=1}^n e^{\beta (A_{ij} - M)} v_j}{\sum_{k=1}^n e^{\beta (A_{ik} - M)}} \,.
\]
As $\beta \to \infty$, for each $j$ with $A_{ij} < M$ the factor $e^{\beta(A_{ij}-M)}$ tends to $0$. Meanwhile for $j^*$ with $A_{i j^*} = M$, we have $e^{\beta(A_{i j^*}-M)} = e^0 = 1$, so the term $j^*$ survives. Thus in the limit,
\[
\sum_{k} e^{\beta (A_{ik} - M)} \to 1 \,, \quad \sum_{j} e^{\beta (A_{ij} - M)} v_j \to v_{j^*} \,.
\]
It follows that $\lim_{\beta \to \infty} y_i(\beta) = v_{j^*}$. In other words, the output for query $i$ converges to the single value $v_{j^*}$ associated with the largest attention score in that row (the ``winning'' key). By definition of the tropical matrix product, this limiting value can be written as
\[
v_{j^*} = A_{i j^*} + v_{j^*} - A_{i j^*} = \max_{1 \le j \le n} \{ A_{ij} + v_j \} - M \,,
\]
since $A_{i j^*} = M$ and subtracting the constant $M$ does not affect which $j$ attains the maximum. (Geometrically, adding or subtracting $M$ corresponds to shifting all weights by a constant, which vanishes under $\max$.) Therefore $\lim_{\beta\to\infty} y_i(\beta) = \max_{j} \{ A_{ij} + v_j \}$. This holds for each $i$, so in vector form $\lim_{\beta\to\infty} (\softmax(\beta A)V) = A \otimes V$, as claimed.
\end{proof}

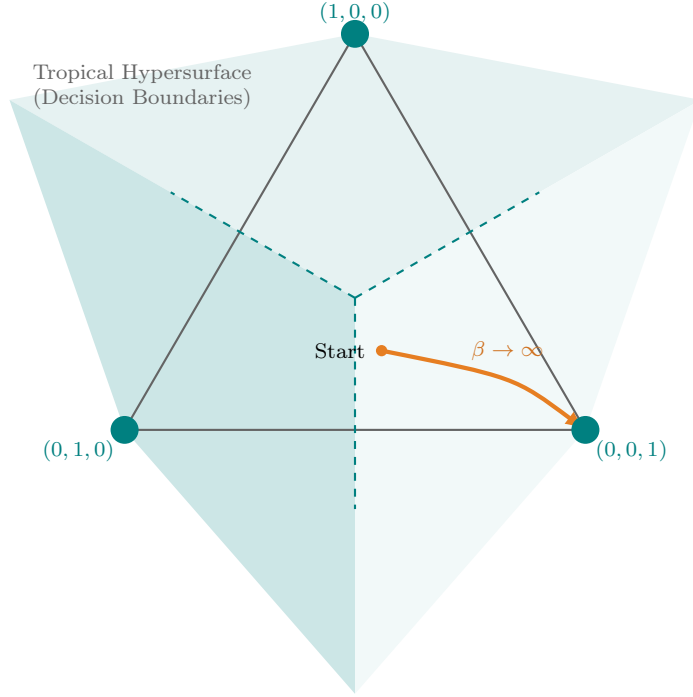
\begin{figure}[t]
\centering
\begin{tikzpicture}[scale=3.5, >=latex]
\coordinate (A) at (90:1);   
\coordinate (B) at (210:1);  
\coordinate (C) at (330:1);  
\coordinate (O) at (0,0);    

\fill[acadTeal!10] (O) -- (30:1.5) -- (A) -- (150:1.5) -- cycle;
\fill[acadTeal!20] (O) -- (150:1.5) -- (B) -- (270:1.5) -- cycle;
\fill[acadTeal!5]  (O) -- (270:1.5) -- (C) -- (30:1.5) -- cycle;

\draw[thick, acadGray] (A) -- (B) -- (C) -- cycle;

\draw[dashed, thick, acadTeal] (O) -- (30:0.8);
\draw[dashed, thick, acadTeal] (O) -- (150:0.8);
\draw[dashed, thick, acadTeal] (O) -- (270:0.8);

\coordinate (StartP) at (0.2, -0.4); 
\draw[->, ultra thick, acadOrange] (0.1, -0.2) .. controls (0.6, -0.3) .. (C);

\node[fill=acadOrange, circle, inner sep=1.5pt, label={left:\scriptsize Start}] at (0.1, -0.2) {};
\node[right, text=acadOrange!90!black, font=\bfseries\scriptsize] at (0.4, -0.2) {$\beta \to \infty$};

\fill[acadTeal] (A) circle (1.5pt) node[above] {\scriptsize $(1,0,0)$};
\fill[acadTeal] (B) circle (1.5pt) node[below left] {\scriptsize $(0,1,0)$};
\fill[acadTeal] (C) circle (1.5pt) node[below right] {\scriptsize $(0,0,1)$};

\node[align=center, font=\scriptsize, text=acadGray] at (-0.8, 0.8) {Tropical Hypersurface\\(Decision Boundaries)};

\end{tikzpicture}
\caption{Visualization of the tropical limit on the probability simplex. As the inverse temperature $\beta$ increases, a softmax probability distribution (orange path) concentrates toward a vertex (a one-hot vector). The dashed lines represent the \emph{tropical hypersurface} (decision boundaries) where coordinate scores are tied. These boundaries partition the simplex into regions where one component dominates. In the limit $\beta \to \infty$, the system flows deterministically to the vertex with the maximum score.}
\label{fig:simplex}
\end{figure}

\begin{corollary}[Multi-Layer Tropical Path]\label{cor:Llayers}
Consider $L$ successive self-attention layers in the tropical regime ($\beta \to \infty$). Let $V^{(0)}$ be the initial value vector (input token embeddings) and $Y^{(L)}$ the output after $L$ layers. Then
\[
Y^{(L)} = A^{\otimes L} \otimes V^{(0)} \,,
\]
where $A^{\otimes L}$ denotes the $L$-fold tropical power of $A$ (i.e., $A$ tropically multiplied with itself $L$ times). In particular, the $i$-th component of $Y^{(L)}$ is
\[
Y^{(L)}_i = \max_{j_0, \dots, j_{L-1}} \{ A_{i j_{L-1}} + A_{j_{L-1} j_{L-2}} + \dots + A_{j_1 j_0} + V^{(0)}_{j_0} \} \,,
\]
where the maximum is over all paths $j_0 \to j_1 \to \cdots \to j_{L-1} \to i$ of length $L$.
\end{corollary}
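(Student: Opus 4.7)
The plan is to establish the corollary by induction on $L$, using Theorem~\ref{thm:self-attn-trop} as the single-layer base case and the associativity of tropical matrix multiplication to unroll the path-sum expression. Writing $Y^{(\ell)}$ for the output of layer $\ell$ in the $\beta \to \infty$ limit, the base case $L=1$ is exactly the conclusion of Theorem~\ref{thm:self-attn-trop}, giving $Y^{(1)} = A \otimes V^{(0)}$, which agrees with the claimed formula since the only length-$1$ path into $i$ is of the form $j_0 \to i$.

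For the inductive step, I would assume $Y^{(L-1)} = A^{\otimes(L-1)} \otimes V^{(0)}$ and then apply Theorem~\ref{thm:self-attn-trop} one more time, now with the value vector $Y^{(L-1)}$ playing the role of $V$. This immediately yields $Y^{(L)} = A \otimes Y^{(L-1)} = A \otimes \bigl(A^{\otimes(L-1)} \otimes V^{(0)}\bigr)$, and by associativity of the tropical matrix product this equals $A^{\otimes L} \otimes V^{(0)}$. Associativity itself is a short index computation: both sides reduce coordinate-wise to $\max_j\{A_{ij} + (A^{\otimes(L-1)} \otimes V^{(0)})_j\}$, with the inner factor defined recursively by the tropical power.

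To make the path-sum structure explicit, I would unfold $A^{\otimes L} \otimes V^{(0)}$ one factor at a time. Each application of the tropical product introduces a fresh intermediate index $j_\ell$ together with an additive contribution $A_{j_{\ell+1} j_\ell}$, and the nested maxima pull outward into a single joint maximum over the tuple $(j_0, j_1, \dots, j_{L-1})$. Setting $j_L := i$ recovers exactly the stated expression, with $V^{(0)}_{j_0}$ as the starting weight and the increments $A_{j_{\ell+1} j_\ell}$ as the edge weights along a length-$L$ walk terminating at $i$.

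The main obstacle is justifying that the $\beta \to \infty$ limit commutes with composition of layers: Theorem~\ref{thm:self-attn-trop} gives convergence for a fixed value vector, but here the input to the $\ell$-th layer itself depends on $\beta$ through the previous $\ell-1$ layers. The cleanest way forward is to assume the generic condition that each row of every iterated score matrix admits a unique maximiser, in which case the argument of Theorem~\ref{thm:self-attn-trop} yields convergence that is locally uniform in the value vector, and this uniformity propagates through the finite composition. Tied rows can be handled either by an $\varepsilon$-perturbation of $A$ followed by taking $\varepsilon \to 0$, or by reinterpreting the limit as convergence to the set of achieving indices under the tropical $\max$. Once this commutation is secured, the induction closes and the path interpretation follows as above.
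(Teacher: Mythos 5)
Your proof follows the same route as the paper's: induction on $L$ with Theorem~\ref{thm:self-attn-trop} as the base case, associativity of the tropical matrix product for the inductive step, and an explicit unfolding of the nested maxima to obtain the path-sum formula. You go beyond the paper in one respect worth keeping: the paper's one-line proof silently composes the already-tropicalized layers and never addresses whether the $\beta \to \infty$ limit commutes with the composition of finite-$\beta$ layers, whereas your locally-uniform-convergence argument under a generic uniqueness assumption (with the $\varepsilon$-perturbation fallback for ties) addresses exactly that gap.
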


\begin{proof}
This is proved by induction on $L$, using Theorem~\ref{thm:self-attn-trop} for the base case $L=1$ and the associativity of tropical matrix multiplication to extend to $L>1$. The formula for $Y^{(L)}_i$ simply expands the tropical product explicitly. Notably, this expression is exactly the dynamic programming update for the best path of length $L$ from some starting index $j_0$ to the final index $i$. In other words, each attention layer performs one Bellman--Ford relaxation step on the weighted directed complete graph with adjacency matrix $A$, gradually building up the optimal path one hop at a time.
\end{proof}

\begin{figure}[t]
\centering
\begin{tikzpicture}[>=latex, node distance=2.5cm, auto]
\tikzset{
t_node/.style={circle, draw=acadTeal, thick, fill=white, minimum size=8mm, font=\bfseries\sffamily},
edge_norm/.style={->, thick, acadGray!70},
edge_opt/.style={->, ultra thick, acadOrange}
}

\node[t_node] (n0) at (0,0) {0};
\node[t_node] (n1) at (2.5, 1.5) {1};
\node[t_node] (n2) at (2.5, -1.5) {2};
\node[t_node] (n3) at (5,0) {3};

\draw[edge_norm, dashed] (n0) to[bend left=10] node[above, font=\small] {5} (n3);

\draw[edge_norm, dashed] (n0) to[bend right=20] node[below left, font=\small] {6} (n2);
\draw[edge_norm, dashed] (n2) to[bend right=20] node[below right, font=\small] {1} (n3);

\draw[edge_opt] (n0) to[bend left=20] node[above left, text=acadOrange] {\bfseries 4} (n1);
\draw[edge_opt] (n1) to[bend left=20] node[above right, text=acadOrange] {\bfseries 4} (n3);

\node[draw=acadTeal, rounded corners, fill=acadLight, text width=5.5cm, inner sep=6pt] 
at (10.5, 0) {
    \textbf{\footnotesize Pathfinding Logic (Target: Node 3)}\\
    \scriptsize
    \begin{itemize}
        \item[--] \textcolor{acadGray}{Direct ($0 \to 3$): $w=5$}
        \item[--] \textcolor{acadGray}{Via 2 ($0 \to 2 \to 3$): $w=6+1=7$}
        \item[--] \textcolor{acadOrange}{\textbf{Via 1 ($0 \to 1 \to 3$): $w=4+4=8$}}
    \end{itemize}
    \vspace{-2pt}
    \textit{\tiny The tropical product selects the max-weight path.}
};

\end{tikzpicture}
\caption{A toy illustration of tropical multi-hop reasoning. Each node represents a token. Directed edges are labeled with attention scores $A_{ij}$. In the tropical limit, a two-layer attention stack selects the path of length 2 maximizing the total weight. To compute the state at token 3, the model compares the direct edge (weight 5) against two-hop paths. The optimal path $0 \to 1 \to 3$ (highlighted in orange) has total weight $8$, surpassing the alternatives. This mirrors the Bellman--Ford algorithm.}
\label{fig:graphpath}
\end{figure}

\section{Discussion}
Our findings reveal a new perspective on how Transformers may carry out multi-step reasoning. In the high-confidence limit, self-attention no longer blends information from multiple tokens---it picks a single strongest connection. Consequently, an $L$-layer Transformer essentially selects an $L$-hop chain of tokens that maximizes a cumulative similarity score. This aligns with the intuition behind \emph{chain-of-thought}: the model is internally traversing a path of intermediate ``thought'' tokens that lead to a final answer. Each attention head can be seen as a pointer to the next step in a reasoning sequence, and deeper networks (or multiple reasoning steps allowed during prompting) enable longer chains. Recent theoretical results \cite{Hahn2020,Li2024} support this view by showing that without intermediate steps, a Transformer is limited in the complexity of functions it can compute, but allowing a sequence of $T$ steps (either via depth or prompting) can extend its computational power.

From a geometric standpoint, the tropical interpretation means the Transformer's computation is piecewise-linear and structured by \emph{tropical hypersurfaces}. Figure~\ref{fig:simplex} illustrated how increasing $\beta$ causes a probability distribution to concentrate on the largest component; the dashed lines in the simplex are where ties occur (the equations of the tropical hypersurface) \cite{Develin2004,Maclagan2015}. When attention selects the $\argmax$ at each layer, the network's state moves into one of the regions determined by these boundaries. In a high-dimensional token space, one can imagine a complex polyhedral partition where each region corresponds to a specific discrete chain-of-thought (a particular sequence of tokens dominating the attention at successive layers). The Transformer's forward pass in the tropical regime thus corresponds to traversing a vertex-to-vertex path on a tropical polytope defined by the attention scores. This path-finding view could potentially explain behaviors observed in large language models: for instance, why prompting a model to reason step-by-step \cite{Wei2022} or to ``think aloud'' might help, as it nudges the model's internal traversal to explicitly articulate intermediate nodes rather than making a single jump.

It is important to note that real Transformers operate at finite $\beta$ (finite confidence) and often with multiple attention heads that may explore different paths in parallel. Still, the tropical limit provides a skeletal picture of the underlying computation. Understanding how close practical models are to this regime, and whether they leverage approximate path-finding internally, is an exciting direction for future work. Connections between tropical optimization and neural network reasoning have been explored in other contexts (e.g., tropical geometry of deep ReLU networks \cite{Zhang2018}), suggesting a rich interface between algebraic geometry and machine learning theory. Our work specifically bridges idempotent mathematics \cite{Maslov1992,Litvinov2007} with modern AI reasoning mechanisms, contributing a piece to the puzzle of how large models can perform complex, multi-step inferences.

\bibliographystyle{plain}

\end{document}